\DeclareMathOperator{\sign}{sign}
\DeclareMathOperator{\Tr}{Tr}
\theoremstyle{plain}
\newtheorem{theorem}{Theorem}
\newtheorem{prop}{Proposition}
\theoremstyle{definition}
\newtheorem{problem}{Problem}
\theoremstyle{remark}
\newtheorem{remark}{Remark}
\begin{document}

\title[An Algorithmic Solution to the Five-Point Pose Problem]{An Algorithmic Solution to the Five-Point Pose Problem Based on the Cayley Representation of Rotations}

\author{E.V. Martyushev}

\date{February 2, 2013}

\keywords{Five-point pose problem, epipolar constraints, Cayley representation}

\address{South Ural State University, 76 Lenin Avenue, Chelyabinsk 454080, Russia}
\email{mev@susu.ac.ru}

\begin{abstract}
We give a new algorithmic solution to the well-known five-point relative pose problem. Our approach does not deal with the famous cubic constraint on an essential matrix. Instead, we use the Cayley representation of rotations in order to obtain a polynomial system from epipolar constraints. Solving that system, we directly get relative rotation and translation parameters of the cameras in terms of roots of a 10th degree polynomial.
\end{abstract}

\maketitle

\section{Introduction}

In the paper presented we give an algorithmic solution to the 5-point 2-view relative pose problem. It is formulated as follows.
\begin{problem}
\label{problem1}
We are given two calibrated pinhole cameras with centers $O_1$, $O_2$ and five points $Q_1, \ldots, Q_5$ lying in front of the cameras in 3-dimensional Euclidean space, see Figure~\ref{fig:Q5O2}. In every camera coordinate frame the directing vectors of $O_jQ_i$ are only known. The problem is in finding the relative position and orientation of the second camera with respect to the first one.
\end{problem}
\begin{figure}[ht]
\centering
\includegraphics[scale=0.3]{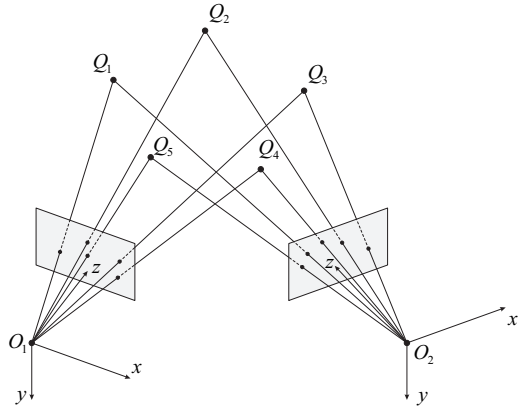}
\caption{To formulation of the five-point relative pose problem}\label{fig:Q5O2}
\end{figure}

The 5-point relative pose problem is a key to the 3d scene reconstruction problem, which is in turn used in many computer vision applications such as augmented reality, self-parking systems, robot path-planning, navigation, etc. It is well known that 5-point algorithms yield significantly better results in accuracy and reliability than 6-, 7- and 8-point algorithms. Moreover, for planar and near-planar scenes only 5-point method allows to get a robust solution without any additional modification of the algorithm.

Problem~\ref{problem1} was first shown by Kruppa~\cite{Kruppa} in 1913 to have at most eleven solutions. Using the methods of projective geometry, he proposed an algorithm for solving the problem, although it could not lead to a numerical implementation. Demazure~\cite{Demazure}, Faugeras and Maybank~\cite{FM}, Heyden and Sparr~\cite{HS} then sharpened Kruppa's result and proved that the exact number of solutions (including complex) is ten.

More efficient and practical solution has been presented by Philip~\cite{Philip} in 1996. His method requires to solve a 13th degree polynomial. In 2004 Nist\'{e}r~\cite{Nister} improved Philip's algorithm and expressed a solution in terms of a real root of 10th degree polynomial. Afterwards, there were presented many modifications of that algorithm simplifying its implementation~\cite{LH} or making it more numerically stable~\cite{KBP, SEN}.

In this paper we give yet another algorithmic solution to the problem using the well-known Cayley representation of rotation matrices~\cite{Cayley}. Our approach  does not mix rotation and translation parameters of an essential matrix and nevertheless allows one to express a solution in terms of a root of 10th degree univariate polynomial. Experiments on synthetic data show that the method is comparable in accuracy with the existing five-point solvers.

The rest of the paper is organized as follows. In Section~\ref{sec:description} we describe in detail our algorithm. In Section~\ref{sec:experiments} we make a comparison of our algorithm with the original Nist\'{e}r solver~\cite{Nister} on synthetic data. Section~\ref{sec:discussion} concludes.

\subsection{Notation}

We use $\mathbf a, \mathbf b, \ldots$ for column vectors, and $\mathbf A, \mathbf B, \ldots$ for matrices. For a matrix $\mathbf A$, the entries are $A_{ij}$, the transpose is $\mathbf A^{\mathrm T}$, the trace is $\Tr(\mathbf A)$, and the determinant is $\det(\mathbf A)$. For two vectors $\mathbf a$ and $\mathbf b$, the vector product is $\mathbf a\times \mathbf b$, and the scalar product is $\mathbf a^{\mathrm T} \mathbf b$. For a vector $\mathbf a$, the notation $[\mathbf a]_\times$ stands for a skew-symmetric matrix such that $[\mathbf a]_\times \mathbf b = \mathbf a \times \mathbf b$ for any vector~$\mathbf b$.

We use $\mathbf I$ for identical matrix and $\mathbf 0$ for zero matrix or vector, $\|\cdot\|$ for the Frobenius norm.

\section{Description of the algorithm}
\label{sec:description}

\subsection{Initial data transformation}
\label{ssec:transform}
Initial data for our algorithm are the homogeneous coordinates $x_{ji}$, $y_{ji}$, $z_{ji}$ of points $Q_i$ in the coordinate frame of $j$th camera, $j = 1, 2$, $i = 1, \ldots, 5$ (see Figure~\ref{fig:Q5O2}).

Without loss of generality we can set $x_{j1} = y_{j1} = x_{j2} = 0$ for $j = 1, 2$. The numerically stable way of doing this is as follows. We combine the initial data into two $3\times 5$ matrices
\begin{equation}
\label{eq:matrixAj}
\mathbf A_j = \begin{bmatrix}x_{j1} & \ldots & x_{j5} \\ y_{j1} & \ldots & y_{j5} \\ z_{j1} & \ldots & z_{j5} \end{bmatrix},
\end{equation}
and compute the matrices
\begin{equation}
\label{eq:rotations}
\mathbf A''_j = \mathbf H_{j2} \mathbf A'_j = \mathbf H_{j2} \mathbf H_{j1} \mathbf A_j,
\end{equation}
where $\mathbf H_{j1}$ and $\mathbf H_{j2}$ are the Householder matrices zeroing $x_{j1}$, $y_{j1}$ and~$x_{j2}$ respectively. The corresponding Householder vectors are
\[
\mathbf h_{j1} = \begin{bmatrix}x_{j1} \\ y_{j1} \\ z_{j1} + \sign(z_{j1})\sqrt{x_{j1}^2 + y_{j1}^2 + z_{j1}^2}\end{bmatrix}, \quad \mathbf h_{j2} = \begin{bmatrix}x_{j2}' \\ y_{j2}' + \sign(y_{j2}')\sqrt{x_{j2}'^2 + y_{j2}'^2} \\ 0\end{bmatrix}.
\]

We will see that transformation~\eqref{eq:rotations}, being quite simple, noticeably simplifies our further computations. In particular, this will allow us to easily convert the resulting 20th degree univariate polynomial~\eqref{eq:W} to the 10th degree polynomial~\eqref{eq:tildeW}.

\subsection{Epipolar constraints and essential matrix}

We first recall some definitions from multiview geometry, see~\cite{Faugeras, HZ, Maybank} for details. A \textit{pinhole camera} is a triple $(O, \pi, \mathbf P)$, where $\pi$ is an image plane, $\mathbf P$ is a central projection of points in 3-dimensional Euclidean space onto~$\pi$, and $O$ is a camera center (center of projection~$\mathbf P$). The \textit{focal length} is the distance between $O$ and~$\pi$, the orthogonal projection of~$O$ onto~$\pi$ is called the \textit{principal point}. A pinhole camera is called \textit{calibrated} if all its intrinsic parameters (such as focal length and principal point's coordinates) are known.

Let there be given two calibrated pinhole cameras $(O_j, \pi_j, \mathbf P_j)$, $j = 1, 2$. Without loss of generality we can set $\mathbf P_1 = \begin{bmatrix}\mathbf I & \textbf{0}\end{bmatrix}$, $\mathbf P_2 = \begin{bmatrix}\mathbf R & \mathbf t\end{bmatrix}$, where $\mathbf R \in \mathrm{SO}(3)$ is the \textit{rotation matrix} and $\mathbf t = \begin{bmatrix}t_1 & t_2 & t_3\end{bmatrix}^{\mathrm T}$ is the \textit{translation vector} normalized so that $\|\mathbf t\| = 1$.

The well-known \textit{epipolar constraints}~\cite{HZ} on $\mathbf R$ and $\mathbf t$ read:
\begin{equation}
\label{eq:epipolar}
\begin{bmatrix}
x_{2i} & y_{2i} & z_{2i}
\end{bmatrix} \mathbf E \begin{bmatrix} x_{1i} \\ y_{1i} \\ z_{1i} \end{bmatrix} = 0, \qquad i = 1, \ldots, 5,
\end{equation}
where $\mathbf E = [\mathbf t]_\times \mathbf R$ is called the \textit{essential matrix}.

\subsection{Ten fourth degree polynomials}
Our approach is based on the following well-known result.
\begin{theorem}[\cite{Cayley}]
\label{thm:cayley}
If a matrix $\mathbf R \in \mathrm{SO}(3)$ is not a rotation through the angle~$\pi + 2\pi k$, $k \in \mathbb Z$, about certain axis, then $\mathbf R$ can be represented as
\begin{equation}
\label{eq:cayley}
\mathbf R = \left(\mathbf{I} - \begin{bmatrix}u\\ v\\ w\end{bmatrix}_\times\right)\left(\mathbf{I} + \begin{bmatrix}u\\ v\\ w\end{bmatrix}_\times\right)^{-1},
\end{equation}
where $u, v, w \in \mathbb R$.
\end{theorem}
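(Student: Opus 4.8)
The plan is to produce the required skew-symmetric matrix explicitly and then reverse the algebra. Writing $\mathbf{S} = [\mathbf{c}]_\times$ with $\mathbf{c} = \begin{bmatrix}u & v & w\end{bmatrix}^{\mathrm T}$, the target relation \eqref{eq:cayley} reads $\mathbf{R} = (\mathbf{I} - \mathbf{S})(\mathbf{I} + \mathbf{S})^{-1}$. First I would solve this formally for $\mathbf{S}$: multiplying out gives $\mathbf{R}(\mathbf{I} + \mathbf{S}) = \mathbf{I} - \mathbf{S}$, hence $(\mathbf{I} + \mathbf{R})\mathbf{S} = \mathbf{I} - \mathbf{R}$, so the natural candidate is
\[
\mathbf{S} = (\mathbf{I} + \mathbf{R})^{-1}(\mathbf{I} - \mathbf{R}).
\]
This step already explains the hypothesis: $\mathbf{I} + \mathbf{R}$ must be invertible, and since an element of $\mathrm{SO}(3)$ has eigenvalues $1, e^{\pm i\theta}$, the value $-1$ occurs as an eigenvalue precisely when $\theta = \pi + 2\pi k$; excluding such $\mathbf{R}$ is exactly what guarantees $\det(\mathbf{I} + \mathbf{R}) \neq 0$.

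The heart of the argument is to verify that this candidate $\mathbf{S}$ is genuinely skew-symmetric. I would use two facts: every matrix built from $\mathbf{R}$ by addition, inversion, and multiplication commutes, since they are all rational functions of the single matrix $\mathbf{R}$; and orthogonality gives $\mathbf{R}^{\mathrm T} = \mathbf{R}^{-1}$. Transposing the candidate yields $\mathbf{S}^{\mathrm T} = (\mathbf{I} - \mathbf{R}^{-1})(\mathbf{I} + \mathbf{R}^{-1})^{-1}$; factoring $\mathbf{R}^{-1}$ out of each parenthesis turns the first factor into $-\mathbf{R}^{-1}(\mathbf{I} - \mathbf{R})$ and the inverted second factor into $(\mathbf{I} + \mathbf{R})^{-1}\mathbf{R}$, and after cancelling $\mathbf{R}^{-1}\mathbf{R}$ via commutativity one is left with $\mathbf{S}^{\mathrm T} = -\mathbf{S}$. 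This is the step I expect to be the main obstacle, since it is where the orthogonality of $\mathbf{R}$ is essential: the same construction for a general invertible $\mathbf{R}$ would not return a skew-symmetric matrix.

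Once $\mathbf{S}$ is known to be real and skew-symmetric, its $3\times 3$ form forces $\mathbf{S} = [\mathbf{c}]_\times$ for a unique $\mathbf{c} = \begin{bmatrix}u & v & w\end{bmatrix}^{\mathrm T} \in \mathbb{R}^3$, read off directly from the off-diagonal entries. To finish I would close the loop: a real skew-symmetric $3\times 3$ matrix has purely imaginary eigenvalues, so $-1$ is never an eigenvalue and $\mathbf{I} + \mathbf{S}$ is invertible; reversing the formal computation above, $(\mathbf{I} + \mathbf{R})\mathbf{S} = \mathbf{I} - \mathbf{R}$ rearranges to $\mathbf{R}(\mathbf{I} + \mathbf{S}) = \mathbf{I} - \mathbf{S}$, which is exactly \eqref{eq:cayley}. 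This recovers $\mathbf{R}$ in the claimed Cayley form with the parameters $u, v, w$ just extracted.
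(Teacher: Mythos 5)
Your proposal is correct and complete. Note, however, that the paper itself offers no proof of Theorem~\ref{thm:cayley}: it is stated as a classical result with a citation to Cayley's 1846 paper, so there is no in-paper argument to compare against. Your proof is the standard one for the inverse Cayley transform, and every step checks out: the spectrum $\{1, e^{\pm i\theta}\}$ of a matrix in $\mathrm{SO}(3)$ correctly ties the hypothesis on the rotation angle to the invertibility of $\mathbf{I} + \mathbf{R}$; the skew-symmetry computation is valid, since $(\mathbf{I} + \mathbf{R})^{-1}$ commutes with any polynomial in $\mathbf{R}$ and $\mathbf{R}^{\mathrm T} = \mathbf{R}^{-1}$; and the reversal $\mathbf{R}(\mathbf{I} + \mathbf{S}) = \mathbf{I} - \mathbf{S}$ legitimately recovers \eqref{eq:cayley}, with the two factors in \eqref{eq:cayley} commuting so the order is immaterial. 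One small simplification available to you: instead of invoking the purely imaginary spectrum of a skew-symmetric matrix, you could note directly that $\det\left(\mathbf{I} + [\mathbf{c}]_\times\right) = 1 + u^2 + v^2 + w^2 > 0$, which is the same quantity $\Delta$ that appears later in the paper's remark on the factorization of the polynomials~$F_i$.
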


Let $\mathbf R$ be represented by~\eqref{eq:cayley} and $\mathbf E(u, v, w, \mathbf t) = [\mathbf t]_\times \mathbf R$ be an essential matrix.
\begin{prop}
\label{prop:invtrans}
If
\begin{equation}
\label{eq:invtrans}
\begin{split}
u' &= \frac{-t_1 - v t_3 + w t_2}{\delta},\\
v' &= \frac{-t_2 - w t_1 + u t_3}{\delta},\\
w' &= \frac{-t_3 - u t_2 + v t_1}{\delta},
\end{split}
\end{equation}
where $\delta = ut_1 + vt_2 + wt_3$, then $\mathbf E(u', v', w', \mathbf t) = -\mathbf E(u, v, w, \mathbf t)$.
\end{prop}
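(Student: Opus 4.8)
The plan is to first collapse the three scalar formulas~\eqref{eq:invtrans} into one vector identity, and then read the statement as the claim that passing from $(u,v,w)$ to $(u',v',w')$ composes $\mathbf R$ on the left with a half-turn about the axis $\mathbf t$. Writing $\mathbf c = \begin{bmatrix} u & v & w\end{bmatrix}^{\mathrm T}$, I would first observe that $\delta = \mathbf c^{\mathrm T}\mathbf t$ and that the three numerators in~\eqref{eq:invtrans} are precisely the components of $-\mathbf t - \mathbf c\times\mathbf t$, since $\mathbf c\times\mathbf t = \begin{bmatrix} vt_3 - wt_2 & wt_1 - ut_3 & ut_2 - vt_1\end{bmatrix}^{\mathrm T}$. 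Hence~\eqref{eq:invtrans} is equivalent to the compact form
\[
\mathbf c' = \begin{bmatrix} u' & v' & w'\end{bmatrix}^{\mathrm T} = -\frac{(\mathbf I + [\mathbf c]_\times)\,\mathbf t}{\mathbf c^{\mathrm T}\mathbf t}.
\]

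The heart of the argument is the claim that $\mathbf c'$ is the Cayley parameter of $\mathbf R_\pi\mathbf R$, where $\mathbf R_\pi = 2\mathbf t\mathbf t^{\mathrm T} - \mathbf I$; that is, $\mathbf R(\mathbf c') = \mathbf R_\pi\mathbf R(\mathbf c)$. Since $\|\mathbf t\| = 1$, the matrix $\mathbf R_\pi$ fixes $\mathbf t$ and negates every vector orthogonal to $\mathbf t$, so it is the rotation through $\pi$ about the axis $\mathbf t$ and lies in $\mathrm{SO}(3)$. Granting the claim, the conclusion is immediate: because $[\mathbf t]_\times\mathbf t = \mathbf t\times\mathbf t = \mathbf 0$,
\[
\mathbf E(u',v',w',\mathbf t) = [\mathbf t]_\times\mathbf R_\pi\mathbf R = \bigl(2([\mathbf t]_\times\mathbf t)\mathbf t^{\mathrm T} - [\mathbf t]_\times\bigr)\mathbf R = -[\mathbf t]_\times\mathbf R = -\mathbf E(u,v,w,\mathbf t).
\]

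To establish the claim I would substitute the compact form of $\mathbf c'$ into the closed Cayley--Rodrigues expression
\[
\mathbf R(\mathbf c) = \frac{(1 - \|\mathbf c\|^2)\mathbf I + 2\mathbf c\mathbf c^{\mathrm T} - 2[\mathbf c]_\times}{1 + \|\mathbf c\|^2},
\]
which follows from~\eqref{eq:cayley} together with $[\mathbf c]_\times^2 = \mathbf c\mathbf c^{\mathrm T} - \|\mathbf c\|^2\mathbf I$, and then match the result against $\mathbf R_\pi\mathbf R(\mathbf c)$ entry by entry. The computation is eased by the auxiliary identities $\mathbf t^{\mathrm T}(\mathbf c\times\mathbf t) = 0$, $\|\mathbf c\times\mathbf t\|^2 = \|\mathbf c\|^2 - (\mathbf c^{\mathrm T}\mathbf t)^2$ and $[\mathbf c\times\mathbf t]_\times = \mathbf t\mathbf c^{\mathrm T} - \mathbf c\mathbf t^{\mathrm T}$, which in particular give the clean collapse $1 + \|\mathbf c'\|^2 = (1 + \|\mathbf c\|^2)/\delta^2$ of the otherwise awkward denominator. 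The main obstacle is purely this bookkeeping: all the terms involved are rank-one outer products in $\mathbf t$ and $\mathbf c$, and the work is to see the factors of $\delta$ cancel so that the two sides agree. Finally I would note the implicit nondegeneracy $\delta = \mathbf c^{\mathrm T}\mathbf t \neq 0$: when $\mathbf c\perp\mathbf t$ the composite $\mathbf R_\pi\mathbf R$ is itself a half-turn and has no Cayley parameter, consistent with the exclusion in Theorem~\ref{thm:cayley}, and the formula~\eqref{eq:invtrans} correctly breaks down there.
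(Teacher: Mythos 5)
Your proof is correct and is essentially the paper's own argument: your half-turn $\mathbf R_\pi = 2\mathbf t\mathbf t^{\mathrm T} - \mathbf I$ is exactly the paper's $-\mathbf H_{\mathbf t}$ with $\mathbf H_{\mathbf t} = \mathbf I - 2\mathbf t\mathbf t^{\mathrm T}$, and both arguments reduce the proposition to checking that \eqref{eq:invtrans} gives the Cayley parameters of $-\mathbf H_{\mathbf t}\mathbf R$, with $\mathbf E' = -\mathbf E$ following from $[\mathbf t]_\times\mathbf t = \mathbf 0$. The only difference is presentational: the paper solves $\mathbf R'(u',v',w') = -\mathbf H_{\mathbf t}\mathbf R(u,v,w)$ for the parameters and asserts uniqueness, while you verify the given formulas directly and usefully flesh out the ``straightforward computation'' (compact vector form of \eqref{eq:invtrans}, the Cayley--Rodrigues closed form, and the identity $1+\|\mathbf c'\|^2 = (1+\|\mathbf c\|^2)/\delta^2$), plus the degenerate case $\delta = 0$.
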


\begin{proof}
Consider a matrix $\mathbf R' = -\mathbf H_{\mathbf t} \mathbf R \in \mathrm{SO}(3)$, where the Householder matrix $\mathbf H_{\mathbf t} = \mathbf I - 2\mathbf t \mathbf t^{\mathrm T}$. Then, $\mathbf E' = [\mathbf t]_\times \mathbf R' = -\mathbf E$. By a straightforward computation, the equation $\mathbf R'(u', v', w') = -\mathbf H_{\mathbf t} \mathbf R(u, v, w)$ has a unique solution~\eqref{eq:invtrans}.
\end{proof}

Since epipolar constraints~\eqref{eq:epipolar} are linear and homogeneous in $\mathbf t$, we can rewrite them as
\begin{equation}
\label{eq:epipolar2}
\mathbf S\, \mathbf t = \mathbf 0,
\end{equation}
where the $i$th row of $5\times 3$ matrix $\mathbf S$ is
\[
\begin{bmatrix} x_{1i} & y_{1i} & z_{1i}\end{bmatrix} \mathbf R^{\mathrm T} \begin{bmatrix} x_{2i} \\ y_{2i} \\ z_{2i}\end{bmatrix}_\times.
\]
Now we represent rotation $\mathbf R$ in form~\eqref{eq:cayley} and take the determinants of all $3\times 3$ submatrices of matrix $\mathbf S$. This yields ten polynomial equations:
\begin{multline}
\label{eq:systemfi}
f_i = [0]u^4 + [0]u^3v + [0]u^2v^2 + [0]uv^3 + [0]v^4 + [1]u^3 + [1]u^2v \\ +[1]uv^2 + [1]v^3 + [2]u^2 + [2]uv + [2]v^2 + [3]u + [3]v + [4] = 0,
\end{multline}
where $i = 1, \ldots, 10$, $[n]$ means a polynomial of degree~$n$ in the variable~$w$, $[0]$ is a constant.

\begin{remark}
Actually, the determinants of $3\times 3$ submatrices of~$\mathbf S$ give the following expressions:
\[
\frac{F_i}{\Delta^3},
\]
where $\Delta = 1 + u^2 + v^2 + w^2$ and $F_i$ is a polynomial of 6th total degree. However, one can verify that $F_i$ is factorized as $F_i = f_i \Delta$ and the coefficients of~$f_i$ are easily deduced from the coefficients of~$F_i$.
\end{remark}

\subsection{Tenth degree univariate polynomial}
\label{ssec:10thdegree}

Let us rewrite system~\eqref{eq:systemfi} in form
\begin{equation}
\label{eq:sistemBm}
\mathbf B\mathbf m = \mathbf 0,
\end{equation}
where $\mathbf B$ is a $10\times 35$ coefficient matrix and
\[
\mathbf m = \begin{bmatrix}u^4 & u^3v & u^3w & \ldots & v & w & 1 \end{bmatrix}^{\mathrm T}
\]
is a monomial vector.

We expand system~\eqref{eq:sistemBm} with 20 more polynomials $uf_i$, $vf_i$ for $i = 1, \ldots, 5$, and $wf_i$ for $i = 1, \ldots, 10$. Thus we get
\begin{equation}
\label{eq:sistemBm2}
\mathbf B' \begin{bmatrix}\mathbf m' \\ \mathbf m\end{bmatrix} = \mathbf 0,
\end{equation}
where $\mathbf B'$ is a new $30\times 50$ coefficient matrix and
\begin{multline*}
\mathbf m' = [u^4w, u^3vw, u^3w^2, u^2v^2w, u^2vw^2, u^2w^3, \\uv^3w, uv^2w^2, uvw^3, uw^4, v^4w, v^3w^2, v^2w^3, vw^4, w^5]^{\mathrm T}
\end{multline*}
is the five-degree monomial vector. It is clear that system~\eqref{eq:sistemBm2} is equivalent to~\eqref{eq:sistemBm}.

We rearrange columns of matrix $\mathbf B'$ and perform Gauss-Jordan elimination with partial pivoting on it. Then the last six rows of the resulting matrix can be represented in form
\begin{center}
\smallskip\begin{tabular}{|c|cccccccccc|}
\hline
 & $u^3w^2$ & $u^3w$ & $u^3$ & $v^3w^2$ & $v^3w$ & $v^3$ & $uv$ & $u$ & $v$ & $1$\\\hline
$g_1$ & $1$ &  &  &  &  &  & $[3]$ & $[4]$ & $[4]$ & $[5]$\\
$g_2$ &  & $1$ &  &  &  &  & $[3]$ & $[4]$ & $[4]$ & $[5]$\\
$g_3$ &  &  & $1$ &  &  &  & $[3]$ & $[4]$ & $[4]$ & $[5]$\\
$g_4$ &  &  &  & $1$ &  &  & $[3]$ & $[4]$ & $[4]$ & $[5]$\\
$g_5$ &  &  &  &  & $1$ &  & $[3]$ & $[4]$ & $[4]$ & $[5]$\\
$g_6$ &  &  &  &  &  & $1$ & $[3]$ & $[4]$ & $[4]$ & $[5]$\\\hline
\end{tabular},\smallskip
\end{center}
where empty spaces are occupied by zeroes. Also, we have omitted first 28 zero columns. From the corresponding six polynomials $g_1, \ldots, g_6$ we obtain the following four polynomials
\begin{equation}
\begin{bmatrix}h_1 \\ h_2 \\ h_3 \\ h_4\end{bmatrix} \equiv
\begin{bmatrix}g_1 \\ g_2 \\ g_4 \\ g_5\end{bmatrix} -
w \begin{bmatrix}g_2 \\ g_3 \\ g_5 \\ g_6\end{bmatrix} = \mathbf C(w) \begin{bmatrix}uv \\ u \\ v \\ 1 \end{bmatrix} = \mathbf 0,
\end{equation}
where matrix $\mathbf C(w)$ can be represented as
\begin{equation}
\label{eq:matrixC}
\mathbf C(w) = \begin{bmatrix}[4] & [5] & [5] & [6] \\ [4] & [5] & [5] & [6] \\ [4] & [5] & [5] & [6] \\ [4] & [5] & [5] & [6]\end{bmatrix}.
\end{equation}

\begin{remark}
Since we use only six last rows of matrix~$\mathbf B'$, there is no need to perform a ``complete'' Gauss-Jordan elimination on matrix~$\mathbf B'$. For the first 24 rows of~$\mathbf B'$ only lower triangular entries should be zeroed.
\end{remark}

Denote by $\mathcal W = \det \mathbf C(w)$. In general,  it is a 20th degree polynomial in~$w$.
\begin{prop}
\label{prop:symm}
Polynomial $\mathcal W$ has a special symmetric form:
\begin{equation}
\label{eq:W}
\mathcal W = \sum\limits_{k=0}^{10} p_k \left[w^{10+k} + (-w)^{10-k}\right],
\end{equation}
where $p_k \in \mathbb R$.
\end{prop}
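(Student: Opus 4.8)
The plan is to recast the claimed symmetric form as a single functional equation and then to trace that symmetry back to a half-turn ambiguity in the Cayley parametrization. Writing $\mathcal W=\sum_{j=0}^{20}c_jw^j$, a direct comparison of coefficients shows that~\eqref{eq:W} is equivalent to
\[
\mathcal W(w)=w^{20}\,\mathcal W(-1/w),
\]
that is, to the palindrome-type relations $c_j=(-1)^jc_{20-j}$ (the case $j=10$ being vacuous, with $c_{10}=2p_0$ and $c_{20}=c_0=p_{10}$). So it suffices to establish this functional equation.

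First I would introduce the involution $\tau\colon(u,v,w)\mapsto(-v/w,\,u/w,\,-1/w)$. Using the explicit Cayley matrix (equivalently, left quaternion multiplication by the unit $k$), one checks that $\tau$ corresponds to $\mathbf R\mapsto\mathbf D\mathbf R$ with $\mathbf D=\mathrm{diag}(-1,-1,1)$, the half-turn about the $z$-axis, and that its third coordinate is precisely $w\mapsto-1/w$. Substituting $\tau$ into the rows of $\mathbf S$ and using $[\mathbf D\mathbf a]_\times=\mathbf D[\mathbf a]_\times\mathbf D^{\mathrm T}$ with $\det\mathbf D=1$, I would show $\mathbf S(\tau(u,v,w))=\tilde{\mathbf S}(u,v,w)\,\mathbf D$, where $\tilde{\mathbf S}$ is built from the sign-flipped data $x_{2i}\mapsto-x_{2i}$, $y_{2i}\mapsto-y_{2i}$. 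Since right multiplication by $\mathbf D$ leaves all $3\times3$ minors unchanged, the ten polynomials satisfy $f_i\circ\tau=w^{-4}\tilde f_i$. Hence the $w$-eliminant $\tilde{\mathcal W}$ of the sign-flipped system has exactly the reciprocal-negated roots of $\mathcal W$, i.e. $\tilde{\mathcal W}(w)$ is proportional to $w^{20}\mathcal W(-1/w)$.

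This reduces the proposition to a single statement: the eliminant $\mathcal W=\det\mathbf C(w)$ is unchanged, up to a nonzero $w$-free scalar, by the data sign-flip $x_{2i}\mapsto-x_{2i}$, $y_{2i}\mapsto-y_{2i}$; together with the previous step this gives $\mathcal W(w)=w^{20}\mathcal W(-1/w)$. This is exactly where the normalization of Section~\ref{ssec:transform} enters: because $x_{21}=y_{21}=x_{22}=0$, the sign-flip fixes the first point together with the $x$-entry of the second, and I would track its effect through the monomial expansion~\eqref{eq:sistemBm2} and the elimination producing~\eqref{eq:matrixC}. The expected outcome is a relation $\mathbf C(-1/w)=\mathbf C(w)\,\mathrm{diag}(\epsilon_1w^{-4},\epsilon_2w^{-5},\epsilon_3w^{-5},\epsilon_4w^{-6})$ with $\epsilon_b\in\{\pm1\}$ and $\epsilon_1\epsilon_2\epsilon_3\epsilon_4=1$; taking determinants and using $4+5+5+6=20$ yields the functional equation with the correct global sign.

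The hard part is this last step. The matrix $\mathbf C(w)$ is produced by Gauss--Jordan elimination with partial pivoting, so it is not canonical and the per-column reciprocal relations above are not literally preserved by the choices made in the reduction. The clean way around this is to argue not with the computed $\mathbf C$ but with an intrinsic eliminant --- a fixed maximal minor of the expanded coefficient matrix~$\mathbf B'$, well defined up to a $w$-free scalar --- and to verify the reciprocal relations there; alternatively one may simply grind out the identities $c_j=(-1)^jc_{20-j}$ from the entries of $\mathbf B'$. In either case the normalization $x_{j1}=y_{j1}=x_{j2}=0$ is precisely what forces these identities and, in particular, what fixes the overall sign to $+1$ rather than $-1$.
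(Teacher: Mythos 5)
Your reduction of \eqref{eq:W} to the functional equation $\mathcal W(w)=w^{20}\,\mathcal W(-1/w)$ is correct, and so is the computation that the half-turn substitution sends the system $\{f_i\}$ to the system $\{\tilde f_i\}$ for sign-flipped data (modulo a small sign slip: your $\tau=(-v/w,\,u/w,\,-1/w)$ corresponds to $\mathbf R\mapsto\mathbf R\mathbf D$, i.e.\ flipping the \emph{first} camera's data; $\mathbf D\mathbf R$ would be $(v/w,\,-u/w,\,-1/w)$ --- harmless either way). But the proposal has a genuine gap at its crux: by your own first step, the eliminant of the flipped system is proportional to $w^{20}\mathcal W(-1/w)$, so the claim that the data flip leaves the eliminant invariant up to a scalar \emph{is} the functional equation, restated. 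You offer no argument for it beyond an ``expected outcome'' of tracking the flip through the elimination, and you yourself observe why that cannot be carried out as stated: $\mathbf C(w)$ is an artifact of Gauss--Jordan with partial pivoting and admits no canonical per-column reciprocal relation. The fallback --- ``grind out the identities $c_j=(-1)^jc_{20-j}$ from the entries of $\mathbf B'$'' --- is not a proof but a proposal to verify the proposition by brute force. The underlying conceptual problem is that your involution $\tau$ does not preserve the solution set of the given problem: it maps solutions of the original data to solutions of the \emph{flipped} data, which is a genuinely different five-point problem, so no symmetry of $\mathcal W$ follows without further input.

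The missing idea is Proposition~\ref{prop:invtrans}, which supplies an involution on the solution set of the \emph{same} data: since the epipolar constraints depend on $\mathbf E$ only up to sign, the twisted pair $\mathbf R'=-\mathbf H_{\mathbf t}\mathbf R$ (same $\mathbf t$, same point data) is again a solution, with Cayley parameters given by \eqref{eq:invtrans}. Unlike your $\tau$, this map depends on $\mathbf t$, and the normalization enters not through bookkeeping in the elimination but through geometry: the first epipolar constraint with $x_{j1}=y_{j1}=0$ forces $E_{33}=0$ at every solution, whence $t_2=t_1(vw+u)/(uw-v)$, and substituting this into the third identity of \eqref{eq:invtrans} collapses it to $w'=-1/w$ identically on the solution set. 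Thus the multiset of roots of $\mathcal W$ is closed under $w\mapsto -1/w$ directly, $\mathcal W=p_{10}\prod_{i=1}^{10}(w-w_i)(w+w_i^{-1})$, and since each quadratic factor satisfies $w^2q_i(-1/w)=-q_i(w)$, the ten pairs give the global sign $(-1)^{10}=+1$ for free --- exactly the sign you were hoping to extract from the uncontrolled $\epsilon_b$'s. If you want to salvage your data-flip route, this twisted-pair relation is also what you would need to prove the flip-invariance of the eliminant, so the paper's argument is the efficient closure of your plan.
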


\begin{proof}
Due to the conditions $x_{j1}=y_{j1}=0$, we have $E_{33} = 0$. As a consequence,
\[
t_2 = t_1 \frac{R_{23}}{R_{13}} = t_1 \frac{vw + u}{uw - v}.
\]
Substituting this into the last identity in~\eqref{eq:invtrans}, we get $w' = -w^{-1}$. Thus, if $w_i$ is a root of~$\mathcal W$, then so is $-w_i^{-1}$. It follows that
\[
\mathcal W = p_{10}\prod\limits_{i=1}^{10} (w - w_i)(w + w_i^{-1}) = \sum\limits_{k=0}^{10} p_k \left[w^{10+k} + (-w)^{10-k}\right].
\]
\end{proof}

Substituting $\tilde w = w - w^{-1}$, we transform $\mathcal W$ to a 10th degree polynomial
\begin{equation}
\label{eq:tildeW}
\tilde{\mathcal W} = \sum\limits_{k=0}^{10}\tilde p_k \tilde w^k,
\end{equation}
where $\tilde p_k$ can be deduced using the formula $\tilde w^k =  \sum\limits_{i=0}^k (-1)^i \binom{k}{i} w^{2i-k}$. The result reads
\begin{equation}
\label{eq:tildep}
\tilde p_k = \sideset{}{'}\sum\limits_{i=k}^{10} \frac{i}{k} \binom{\frac{i+k}{2}-1}{\frac{i-k}{2}} p_i,
\end{equation}
where the primed sum is taken over all $i$ from $k$ to 10 such that $i - k \bmod 2  = 0$. Note that in case $k = 0$ the r.h.s. of~\eqref{eq:tildep} becomes $\sideset{}{'}\sum\limits_{i=0}^{10} 2p_i$.

\subsection{Structure recovery}
A complex root of~$\tilde{\mathcal W}$ leads to a complex root of~$\mathcal W$ and by~\eqref{eq:cayley} to complex rotation matrix having no geometric interpretation. Hence only real roots of~$\tilde{\mathcal W}$ must be treated.

Real roots of $\tilde{\mathcal W}$ can be efficiently found first using Sturm sequences~\cite{HM} for isolating and then Ridders' method~\cite{Press} for polishing. Then we can recover the second camera matrix applying the following algorithm.

Let $\tilde w_0$ be a real root of~$\tilde{\mathcal W}$. First we find the value
\[
w_0 = \tilde w_0/2 + \sign(\tilde w_0)\sqrt{(\tilde w_0/2)^2 + 1},
\]
which is a root of $\mathcal W$ subject to $|w_0| \geq 1$. After that, we obtain the $u$- and $v$-components of the solution by applying Gaussian elimination with partial pivoting on matrix~$\mathbf C(w_0)$ in~\eqref{eq:matrixC}.

Then we find the entries of $\mathbf R$ by~\eqref{eq:cayley}. Given $\mathbf R$, the translation vector $\mathbf t$ can be found by performing Gaussian elimination with partial pivoting on matrix $\mathbf S(u_0, v_0, w_0)$ in~\eqref{eq:epipolar2}. Here we have also taken into account the normalization constraint $\|\mathbf t\| = 1$.

Let $\mathbf H_{\mathbf t} = \mathbf I - 2\mathbf t \mathbf t^{\mathrm T}$ and $\mathbf R' = -\mathbf H_{\mathbf t}\mathbf R$. It is well-known~\cite{HZ, Nister} that there are four possibilities for the second camera matrix: $\mathbf P_A = \begin{bmatrix}\mathbf R & \mathbf t\end{bmatrix}$, $\mathbf P_B = \begin{bmatrix}\mathbf R & -\mathbf t\end{bmatrix}$, $\mathbf P_C = \begin{bmatrix}\mathbf R' & \mathbf t\end{bmatrix}$ and $\mathbf P_D = \begin{bmatrix}\mathbf R' & -\mathbf t\end{bmatrix}$. The only of these matrices is correct, all others correspond to unfeasible configurations.

The true second camera matrix $\mathbf P_2$ can be derived from the so-called \emph{cheirality constraint} saying that all the scene points must be in front of the cameras. In particular, this is valid for the first scene point~$Q_1$. Denote by
\begin{equation}
\label{eq:c1c2}
c_1 = -\frac{t_1}{R_{13}} = -\frac{t_2}{R_{23}}, \qquad c_2 = c_1R_{33} + t_3.
\end{equation}
Then,
\begin{itemize}
\item
if $c_1>0$ and $c_2>0$, then $\mathbf P_2 = \mathbf P_A$;
\item
else if $c_1<0$ and $c_2<0$, then $\mathbf P_2 = \mathbf P_B$;
\item
else if $c'_1>0$ and $c'_2>0$, then $\mathbf P_2 = \mathbf P_C$;
\item
else $\mathbf P_2 = \mathbf P_D$.
\end{itemize}
Here the value $c'_1$ and $c'_2$ are computed in the same manner as $c_1$ and $c_2$ in~\eqref{eq:c1c2} with $\mathbf R$ being replaced by~$\mathbf R'$.

Finally, the initial second camera matrix is given by
\[
\mathbf P_2^{\textit{ini}} = (\mathbf H_{22}\mathbf H_{21})^{\mathrm T}\mathbf P_2 \begin{bmatrix}\mathbf H_{12}\mathbf H_{11} & \mathbf 0 \\ \mathbf 0 & 1\end{bmatrix},
\]
where the Householder matrices $\mathbf H_{j1}$ and $\mathbf H_{j2}$ are defined in Subsection~\ref{ssec:transform}.

\section{Experiments on synthetic data}
\label{sec:experiments}

\begin{figure}[t]
\centering
\subfigure[Default conditions. The median error is $1.56\times 10^{-13}$ for Nister and $2.94\times 10^{-10}$ for New5pt]
{\includegraphics[scale=0.3]{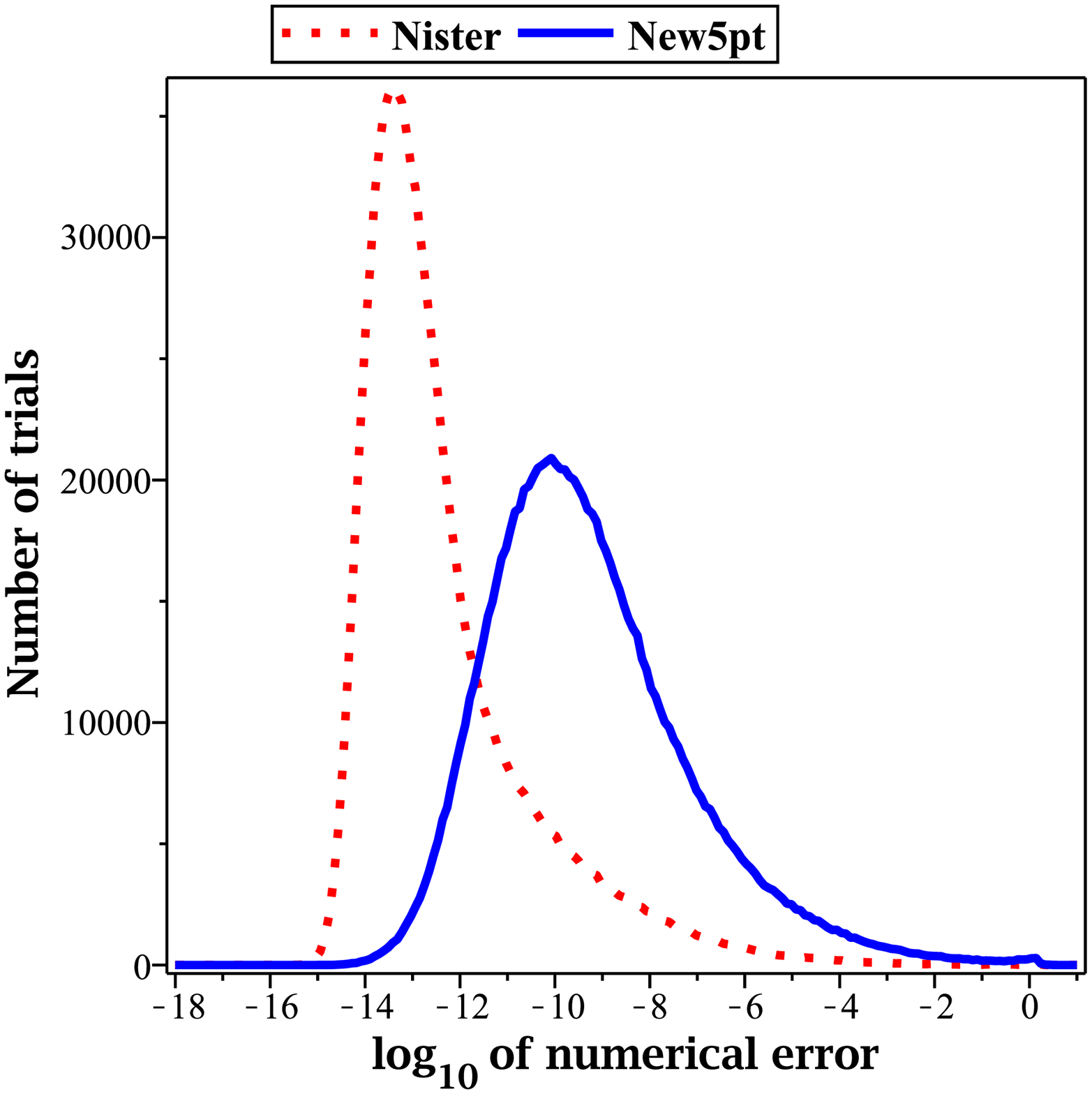}\label{fig:default}} \qquad
\subfigure[Planar scene and forward motion. The median error is $1.52\times 10^{-2}$ for Nister and $7.17\times 10^{-3}$ for New5pt]
{\includegraphics[scale=0.3]{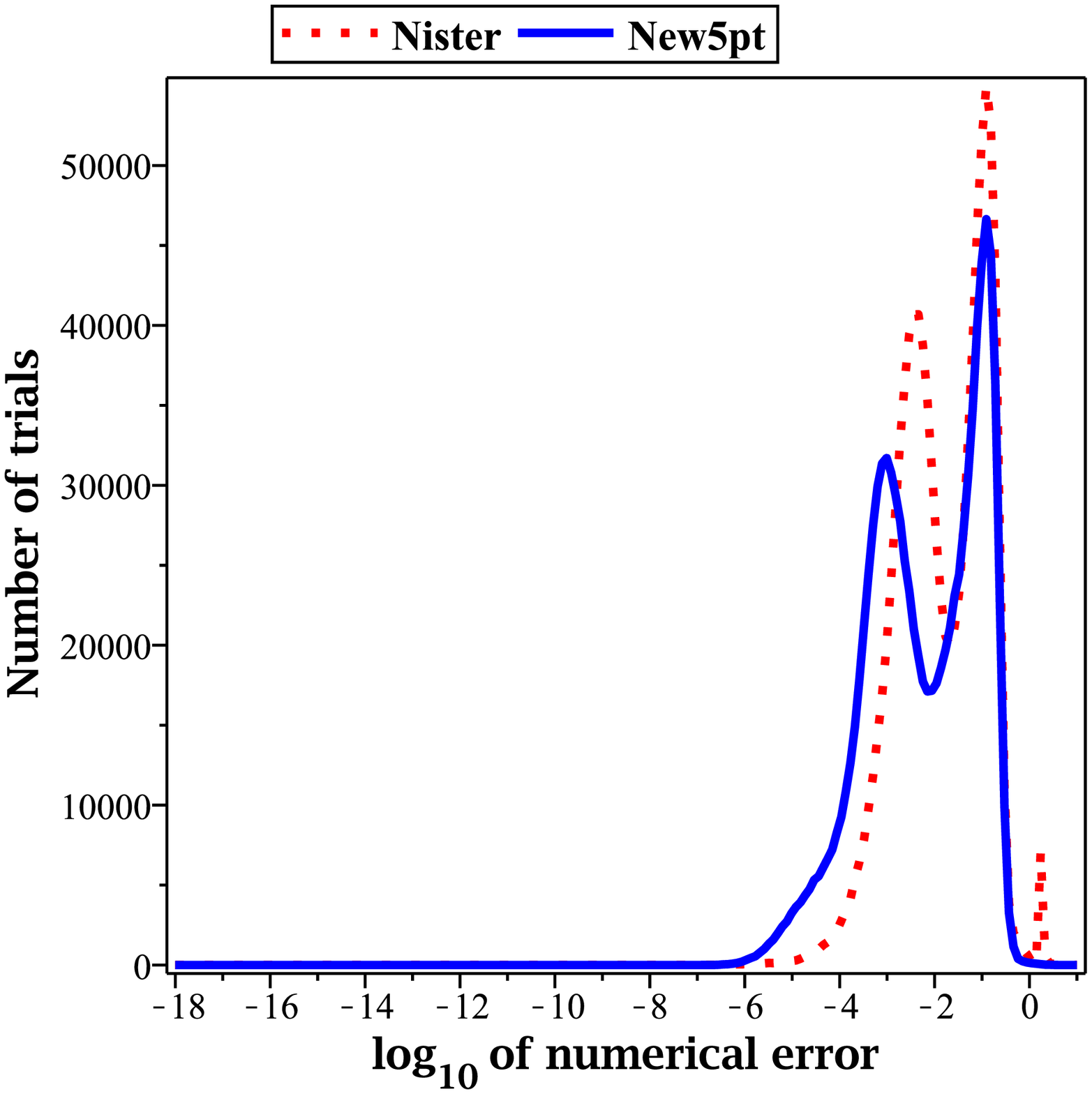}\label{fig:planar}}
\caption{Numerical error distribution}
\label{fig:errors}
\end{figure}

In this section we compare our algorithm with the original 5-point solver by Nist\'{e}r~\cite{Nister} on synthetic data. The C/C++ implementations of both algorithms have been written. All computations are performed in double precision. Synthetic data setup is the same as in~\cite{Nister}:
\begin{center}
\bigskip\begin{tabular}{|c|c|}
\hline
Distance to the scene & 1\\\hline
Scene depth & 0.5\\\hline
Baseline length & 0.1\\\hline
Image dimensions & $352 \times 288$ \\\hline
Field of view & 45 degrees\\\hline
\end{tabular}\bigskip
\end{center}

The \textit{numerical error} is defined by
\begin{equation}
\varepsilon = \|\bar{\mathbf P}_2 - \mathbf P_2\|,
\end{equation}
where $\bar{\mathbf P}_2$ is the ground truth second camera matrix.

The numerical error distributions are reported in Figure~\ref{fig:errors}. The total number of trials is $10^6$ in each experiment. We have compared the algorithms first in case of default conditions (Figure~\ref{fig:default}) and second in the most problematic case in sense of numerical stability --- planar scene and forward motion (Figure~\ref{fig:planar}).

In Figure~\ref{fig:transl_rot} we demonstrate the behaviour of the algorithms under increasing image noise. We add the Gaussian noise with a standard deviation varying from 0 to 1 pixel in a $352 \times 288$ image. One sees that in presence of noise the results of both algorithms are almost coincident.

\begin{figure}[t]
\centering
\subfigure[]
{\includegraphics[scale=0.3]{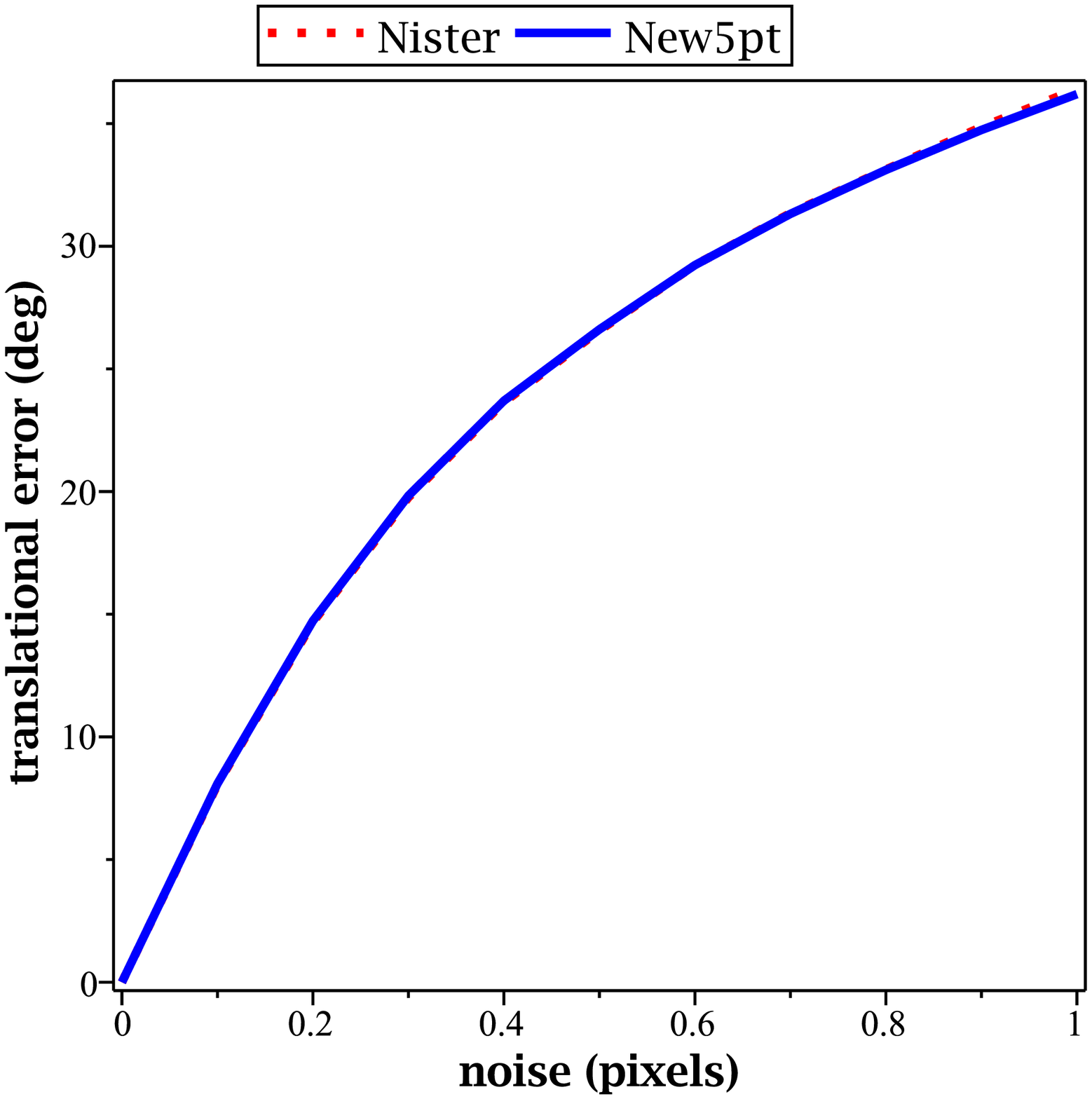}\label{fig:transl}}
\subfigure[]
{\includegraphics[scale=0.3]{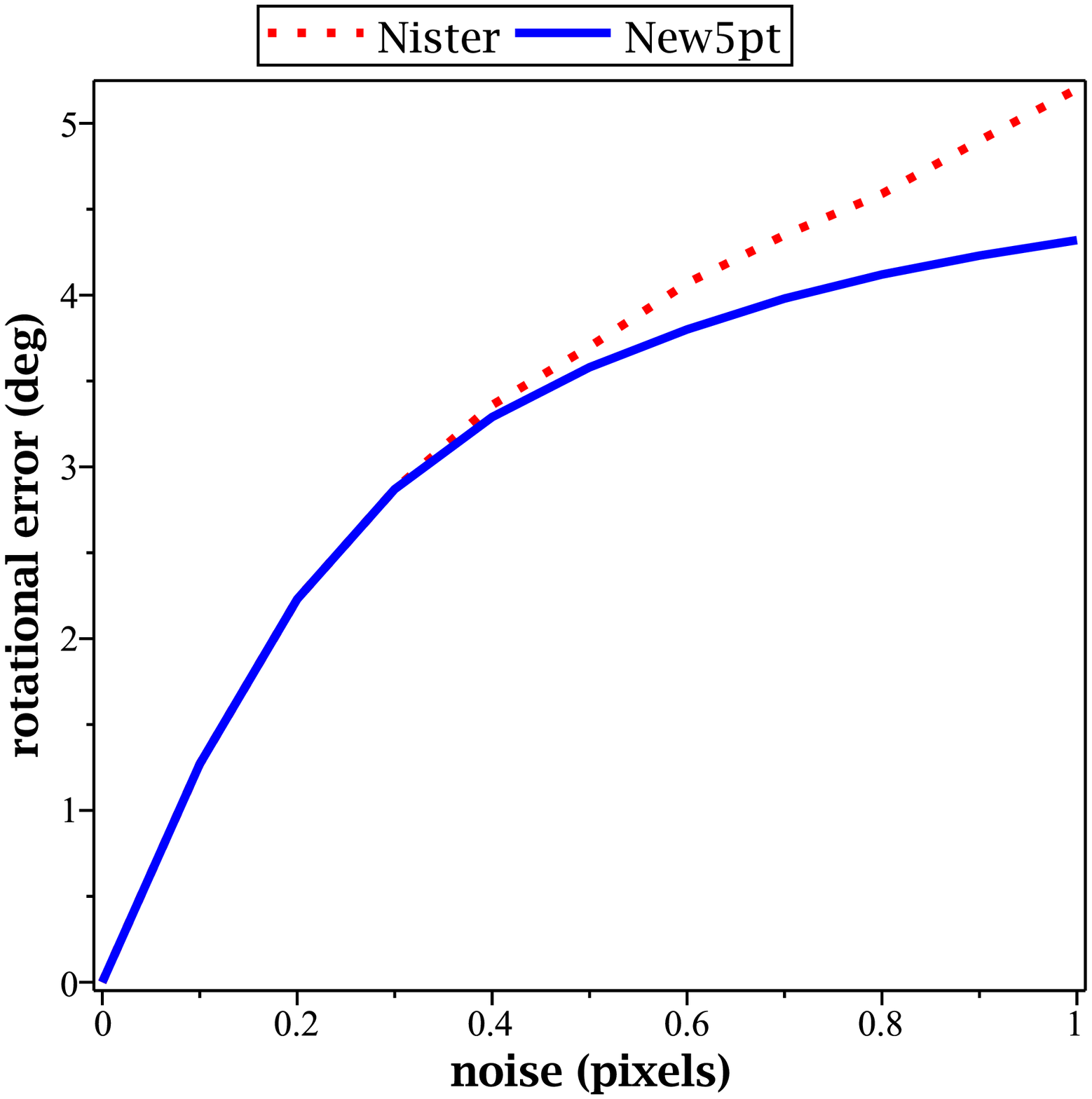}\label{fig:rot}} \qquad
\caption{Translational (left) and rotational (right) errors relative to Gaussian noise. Default conditions. Each point is a median of $10^6$ trials}
\label{fig:transl_rot}
\end{figure}

\section{Discussion of results}
\label{sec:discussion}
A new algorithm for the 5-point relative pose problem is presented. A computation on synthetic data confirms that it is robust enough. In whole, it is a good alternative to the existing five-point solvers. Its major advantage is that it yields a direct structure recovery, i.e. a reconstruction without computing an essential matrix. Such approach is more flexible when we are given some additional information on the camera rotations and/or translations. For instance, if the Euler angles $(\varphi, \theta, \psi)$, representing matrix $\mathbf R$, are known to lie in some limits, then so is the variable
\begin{equation}
\tilde w = -2\cot(\varphi + \psi).
\end{equation}
This allows one to discard some roots of the 10th degree polynomial~$\tilde{\mathcal W}$ at once without structure recovery step.

\bibliographystyle{amsplain}

\end{document}